\newcommand{\GM}[1]{\text{\textnormal{GM}}^{*}{#1}}
\newcommand{\CL}[1]{\text{\textnormal{CL}}^{*}{#1}}
\newcommand{\cost}[4]{\textnormal{c}(#1^{#2}, #3^{#4})}
\newcommand{\metriccost}[4]{\textnormal{c}_{M}(#1^{#2}, #3^{#4})}
\newcommand{\argmin}{\operatornamewithlimits{argmin}}
\newcommand{\sen}[1]{\| #1 \|^{2}}
\newcommand{\argp}[2]{\mathbf{#1_{#2}}\mathbf{V_{#2}}}
\newcommand{\scalprod}[2]{\langle #1, #2\rangle}
\newcommand{\email}[1]{\emph{email:} \texttt{#1}}
\newtheorem{theorem}{Theorem}
\newtheorem{corollary}{Corollary}
\title{On the Relation Between the Common Labelling and the Median Graph\thanks{This work was published in STRUCTURAL, SYNTACTIC, AND STATISTICAL PATTERN RECOGNITION
Lecture Notes in Computer Science, 2012, Volume 7626/2012, 107-115, DOI: 10.1007/978-3-642-34166-31\_2. The original publication is available at http://www.springerlink.com/content/e524g4483g146383/}}
\author{Nicola Rebagliati\thanks{VTT Technical Research Center of Finland.
   \email{nicola.rebagliati@gmail.com}.
 Partially supported, for this work, from FET programme within the EU FP7, under the SIMBAD project, contract 213250. Currently, (01/10/2012-31/09/2013), Marie Curie fellow under the ERCIM “Alain Bensoussan” Fellowship Programme.}
  \and Albert Sol{\'e}-Ribalta\thanks{Universitat Rovira i Virgili - Spain  \email{\{albert.sole,francesc.serratosa\}@urv.cat}.
Research supported in part by Consolider Ingenio 2010 project (CSD2007-00018) and from the CICYT project (DPI2010-17112).}
   \and Marcello Pelillo \thanks{Universit{\`a} Ca' Foscari Venezia - Italy.
\email{pelillo@dsi.unive.it}.
}
    \and Francesc Serratosa \footnotemark[3]   
   }
\begin{document}

\maketitle

\vspace{-0.25in}

\begin{abstract}
In structural pattern recognition, given a set of graphs, the computation of a Generalized Median Graph is a well known problem. Some methods approach the problem by assuming a relation between the Generalized Median Graph and the Common Labelling problem. However, this relation has still not been formally proved. In this paper, we analyse such relation between both problems. The main result proves that the cost of the common labelling upper-bounds the cost of the median with respect to the given set. In addition, we show that the two problems are equivalent in some cases.
\end{abstract}

\section{Introduction}

In many pattern recognition applications, we are given a set of different representations of the same object and the goal is to summarize these representations into a single one. The resulting representation should capture the important features of the object and discard noisy or unexpected variations. When the representation is made using attributed graphs, this graph is identified as the Generalized Median Graph \cite{Jiang2001}, or simply the Median Graph. Given a training set of graphs, the Median Graph is formally defined as a graph which minimizes the sum of costs to all other graphs in the set.

If we assume that vertices are not uniquely labelled, like in \cite{Dickinson2004}, the problem of finding the Median Graph is, in its general form, at least as difficult as the problem of matching two graphs under a particular cost function, e.g. the Graph Edit Distance, which is a NP-Hard problem \cite{astar09}. Indeed, the Median Graph cannot be computed in closed form since its synthesis depends on the matchings between itself and the given graphs and the matchings to the Median Graph clearly require having the Median Graph. A usual way to deal with this chicken-egg problem is using an incremental approach where the Median Graph is coarsely constructed and then iteratively refined until all graphs in the training set are considered. Several approaches address the problem in this fashion \cite{Ferrer2005,Ferrer2007,Ferrer2010,Hlaoui2006,Jain2004,Jain2010}. A completely different approach to compute the Median Graph is to decouple the matchings and the synthesis process. This approach relies on the assumption that given the vertex labellings that compute the Median Graph, its computation can be, in most applications, done efficiently in polynomial time, e.g. averaging the vertices and edge attributes. This approach can be summarized in two steps. In the first step, we obtain a Common Labelling between the given graphs. The objective of the Common Labelling, initially defined in \cite{Sole-Ribalta2010,Sole-Ribalta2011}, is to minimize the pair-wise labellings among a set of graphs with some transitivity restrictions. Once we know this information, we can easily compute an Approximated Median Graph. Figure \ref{figureIntro} illustrates the complete process to generate a Median Graph using a Common Labelling. Note the given set of graphs is labelled to a virtual node set and Median Graph is not computed until the end of the process. The main advantage of using a Common Labelling approach for approximating the Median Graph relies on the fact that the Median Graph does not need to be computed until the end of the process. In this way, labellings of the initial graphs to the Median Graph are not needed and the initial chicken-egg problem disappears.

Several works exist in the literature which decouple the problem of the Median Graph computation. The first method to completely decouple the matching process from the synthesis process was presented by Hlaoui and Wang \cite{Hlaoui2006}. Another recent method, based on linear programming, has been proposed in \cite{Justice2006} and \cite{Mukherjee2009}. But possibly the most complete work on these kind of methods is presented in \cite{Sole-Ribalta2012}. Experiments in \cite{Sole-Ribalta2012} show that using the Common Labelling for computing the Median Graph gives satisfactory results, but up to now a formal relation between the Common Labelling problem and the Median Graph synthesis was missing. In this work, we show that, if the cost for matching graphs is a metric, the two problems are tightly connected because we can bound the Median Graph error using the Common Labelling value. The obtained bounds show that, when the error of the Common Labelling is low, the obtained graph median is close to the real one. In addition, in the specific case of unattributed graphs with the squared Euclidean distance as cost function, the two problems are equivalent.

\begin{figure}[h!]
\centering
\includegraphics[width=1\textwidth]{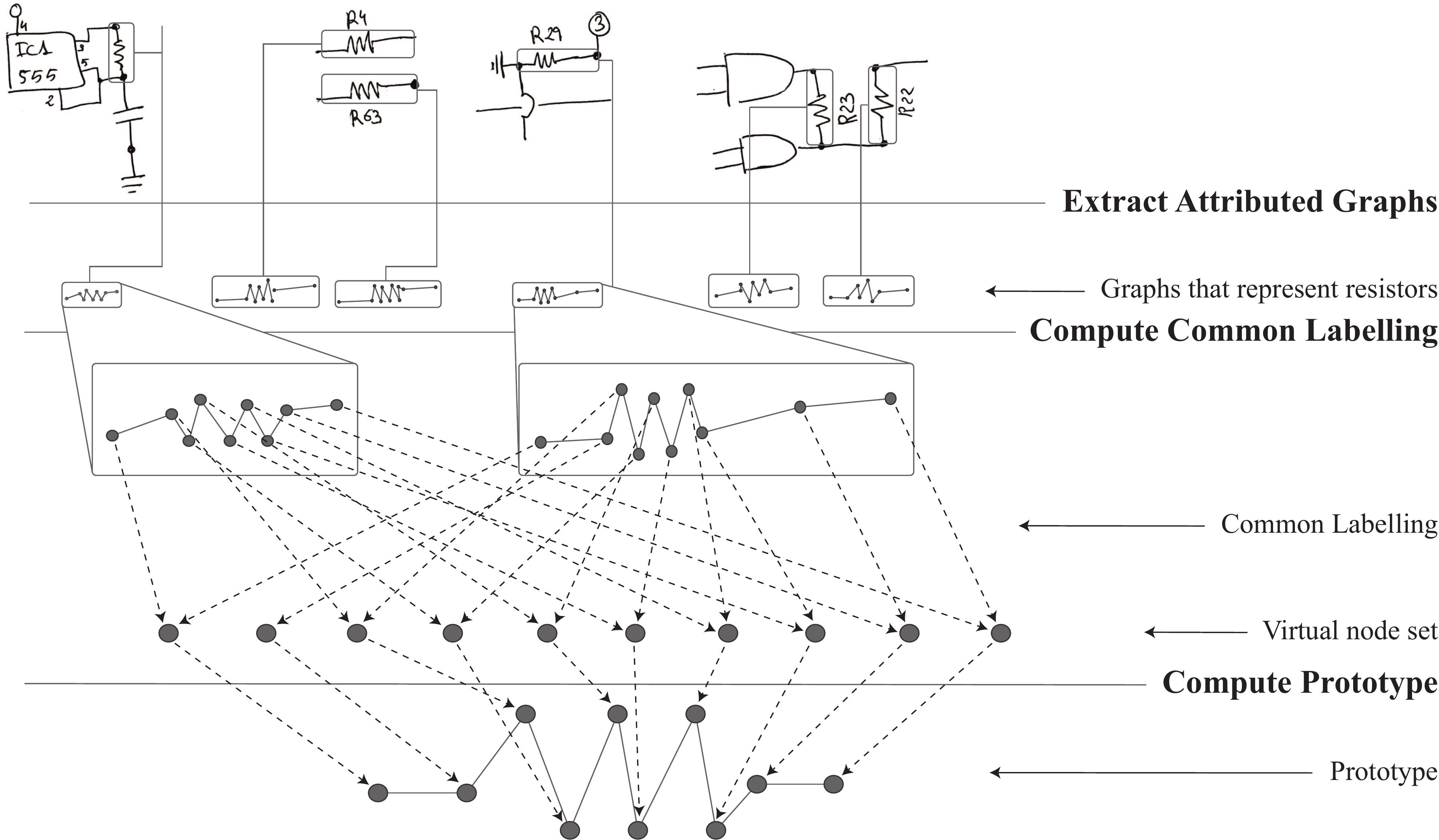}
\label{figureIntro}
\caption{The process for computing an Approximated Median Graph with the Common Labelling. After representing the given objects, which in this figure are sketches of electrical circuits, with attributed graphs we look for labelling the nodes of each graph. The virtual set of nodes does not have structure and is used to compare the labelling for each graph and evaluate their pairwise matching cost. After choosing a labelling for each graph we convert the virtual set of nodes into the actual graph prototype. See section \ref{prob} for a formal definition of the Common Labelling problem.}
\end{figure}

\section{Definitions}
Let $\mathcal{H}$ be a set of attributed graphs representing the input/output space of our problems. Each graph is represented as a tuple $G = (V, E, A_{V}, A_{E})$, where $V=\{v_{1}, ... ,v_{n}\}$ represents the vertex set, $E \subseteq \{e_{a,b}, \forall a,b \in {1..n}\}$ the edge set, and functions $A_{V}: V \rightarrow D_{V}$ and $A_{E}: E \rightarrow D_{E}$ assign attributes to vertex and edges respectively.

Given a set of $m$ attributed graphs $S=\{G_{1},...,G_{m}\}, G_{i}=(V_{i},E_{i},A_V,A_E) \in \mathcal{H}$, we assume that each of these graphs have the same number of vertices $n$. If this is not the case, several solutions have been proposed to extend the size of the graphs \cite{Jain2010,Wong1985}. However, the most common approach is to include null vertices \cite{Wong1985} which represent deletions and insertion of vertices in the resulting labelling. In the general graph matching setting, vertices of each graph are not uniquely identified by their index, i.e. we cannot assign or identify $v_{3} \in V_{1}$ with $v_{3}\in V_{2}$ only because they have the same index $3$. Indeed, the difficult part of comparing a pair of graphs relies in finding a suitable bijection $\pi$ of vertices which provides the right ordering. In the following, given a bijection $\pi$, the notation $G^{\pi}$ means that $v_{i}(\pi) = v_{\pi(i)}$, so that vertices of $V$ and edges $E$ are permuted accordingly to $\pi$. The bijection $id \in \Pi$ represents the identity $v_i(id)=v_i$. Figure \ref{figureTHM2} (a) shows how graphs are permuted to a common reference system with permutations $\pi_i$ and $\rho_i$.
The function $c: \mathcal{H} \times \Pi \times \mathcal{H} \times \Pi \rightarrow R^{+}$ is a user-defined cost between two graphs whose vertices have a fixed bijection. We assume that $c(\cdot,\cdot,\cdot,\cdot)$ can be computed efficiently in polynomial time because the vertex to vertex correspondence is fixed, and consequently also the edge to edge correspondence and their attributes. We use the shorthand $c(G_{i}^{\pi_{i}},G_{j}^{\pi_{j}})=c(G_{i},\pi_{i},G_{j},\pi_{j})$ and $c(G_{i},G_{j})=c(G_{i},id,G_{j},id)$. If the cost function is a metric we denote it as $c_{M}$ in this case, given a fixed set of bijections $\pi_{1,\dots,m}$, the following axioms hold:

\begin{description}
\item[identity] $\metriccost{G_{i}}{}{G_{j}}{} = 0 \Leftrightarrow G_{i}=G_{j}$,
\item[positivity] $\metriccost{G_{i}}{}{G_{j}}{} \geq 0$,
\item[symmetry] $\metriccost{G_{i}}{}{G_{j}}{} = \metriccost{G_{j}}{}{G_{i}}{}$,
\item[triangle inequality] $\metriccost{G_{i}}{}{G_{j}}{} \leq \metriccost{G_{i}}{}{G_{k}}{}+\metriccost{G_{k}}{}{G_{j}}{}$.
\end{description}

We define the distance $d$ between two graph as the minimum cost among all possible bijections of attributes in vertices and edges. That is,

\begin{equation}
d(G_{1},G_{2}):=\min_{\begin{array}{c}
\pi_{1},\pi_{2} \in \Pi
\end{array}}{\metriccost{G_{1}}{{\pi_{1}}}{G_{2}}{{\pi_{2}}}}
\end{equation}

Given a set of graphs $S=(G_{1},...,G_{m}) \subseteq \mathcal{H}$, the Generalized Median Graph \cite{Jiang2001} is defined as a graph $G^{*}$, taken from the set $\mathcal{H}$, which minimizes the average sum of costs to all graphs in $S$:

\begin{equation}
\GM{(\mathcal{H})} := \min_{ \begin{array}{c}
\rho_{1},\dots,\rho_{m} \in \Pi \\
G \in \mathcal{H}
\end{array}}{\dfrac{1}{m}\sum_{i=1}^{m}{\cost{G_{i}}{\rho_{i}}{G}{}}}
\end{equation}

If not explicitly stated the argument of $\GM{}$ is $\mathcal{H}$. In the following, and as Figure \ref{figureTHM2} (a) shows, we will denote with  $\rho_i$ the permutations which obtain the Median Graph.

\section{The Common Labelling Problem}
\label{prob}

Given a set of graphs $S=(G_{1},...,G_{m}) \subseteq \mathcal{H}$, the Common Labelling problem aims at finding a, possibly low cost, consistent multiple isomorphism between the graphs, such that for every three mappings $\pi_{i,j},\pi_{j,r}$ and $\pi_{i,r}$ we have $\pi_{i,j} \circ \pi_{j,r} = \pi_{i,r}$. Equivalently, we look for $m$ consistent bijections assigning vertices of the graph of a virtual vertex set and that minimize the average sum of pairwise distances between graphs in $S$. Its normalized objective function is the following:

\begin{equation}
\CL{} := \min_{\begin{array}{c}
\pi_{1},\dots,\pi_{m} \in \Pi
\end{array}}{\dfrac{1}{m^{2}} \sum_{i=1}^{m}{\sum_{j=1}^{m}{\cost{G_{i}}{\pi_{i}}{G_{j}}{\pi_{j}}}}}
\end{equation}

Once the Common Labelling and the $m$ bijections $\pi_{1,\dots,m}$ that computes the value are obtained, we assume that we can efficiently estimate a median graph $\overline{G}$:

\begin{equation}
\overline{G} \in \argmin_{\begin{array}{c}
G \in \mathcal{H}
\end{array}}{\sum_{i=1}^{m}{\cost{G_{i}}{\pi_{i}}{G}{}}}
\label{medianGraph}
\end{equation}

\noindent which we call Approximated Median Graph. In the following, and as Figure \ref{figureTHM2} (a) shows, we will denote with $\pi_i$ the permutations which obtain the Approximated Median Graph through the Common Labelling.

\section{Relating the Common Labelling with the Generalized Median Graph}
\label{relatingCommonLabelling}

In this section, we show two main results of this work. The first theorem shows the relationship between the objective function of the Common Labelling, $\CL{}$, and the objective function of the Median Graph, $\GM{}$. The second theorem shows that, if the functional of the Common Labelling $\CL{}$ has a low value, the Approximated Median Graph $\overline{G}$ is close to the Median Graph $G^{*}$.

\begin{theorem}
Let $\mathcal{H}$ be a set of graphs and $S=\{G_{1},\dots,G_{m}\}$ a subset of $\mathcal{H}$. In addition, let $\overline{G}$ be the Approximated Median Graph computed considering $S$ and the bijections obtained by the Common Labelling, $\pi_{1,\dots,m}$. Let the cost function $c_M$ be a metric. Then

\begin{equation}
\CL{} \geq \GM(\{\overline{G}\}) \geq \GM{} \geq \dfrac{1}{2}\CL{}
\label{bounds}
\end{equation}

\label{thm:bounds}
\end{theorem}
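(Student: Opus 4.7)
The plan is to establish the four inequalities in the chain separately. Three of them are direct consequences of the optimality definitions, while only the last one requires the metric axioms through the triangle inequality.

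I would first dispatch $\GM(\{\overline{G}\}) \geq \GM{}$: restricting the feasible set of the Median Graph problem from $\mathcal{H}$ to the singleton $\{\overline{G}\}$ only removes candidates, so the minimum cannot decrease. For $\CL{} \geq \GM(\{\overline{G}\})$, let $\pi_{1},\dots,\pi_{m}$ be the bijections achieving $\CL{}$. By the defining property of $\overline{G}$ in (\ref{medianGraph}), for each fixed $j$ we have
\begin{equation*}
\sum_{i=1}^{m} \cost{G_{i}}{\pi_{i}}{\overline{G}}{} \;\leq\; \sum_{i=1}^{m} \cost{G_{i}}{\pi_{i}}{G_{j}}{\pi_{j}}.
\end{equation*}
Averaging both sides over $j\in\{1,\dots,m\}$ and dividing by $m$ gives $\frac{1}{m}\sum_{i} \cost{G_{i}}{\pi_{i}}{\overline{G}}{} \leq \CL{}$. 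Since $\pi_{1},\dots,\pi_{m}$ is a feasible choice of bijections in the minimization defining $\GM(\{\overline{G}\})$, we conclude $\GM(\{\overline{G}\}) \leq \CL{}$.

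The key step is $\GM{} \geq \tfrac{1}{2}\CL{}$, and this is the only place the metric hypothesis is used. Let $\rho_{1}^{*},\dots,\rho_{m}^{*}$ and $G^{*}\in\mathcal{H}$ be minimizers of $\GM{}$. Applying the triangle inequality with $G^{*}$ as intermediate point, and then symmetry, yields
\begin{equation*}
\metriccost{G_{i}}{\rho_{i}^{*}}{G_{j}}{\rho_{j}^{*}} \;\leq\; \metriccost{G_{i}}{\rho_{i}^{*}}{G^{*}}{} + \metriccost{G_{j}}{\rho_{j}^{*}}{G^{*}}{}.
\end{equation*}
Summing over $i,j\in\{1,\dots,m\}$ and dividing by $m^{2}$, the right-hand side collapses to $2\cdot\GM{}$. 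Since $\rho_{1}^{*},\dots,\rho_{m}^{*}$ is itself a feasible choice for the Common Labelling objective, we have $\CL{} \leq \frac{1}{m^{2}}\sum_{i,j}\metriccost{G_{i}}{\rho_{i}^{*}}{G_{j}}{\rho_{j}^{*}}$, and combining these inequalities gives the claimed bound.

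The main obstacle is conceptual rather than computational: one must notice that the Common Labelling objective can be upper-bounded by evaluating it at the Median Graph's witnessing bijections $\rho_{i}^{*}$, after which the triangle inequality through $G^{*}$ converts all $m^{2}$ pairwise costs into twice $\GM{}$. The dual observation, that the $\GM(\{\overline{G}\})$ value can be bounded above by evaluating its objective at the Common Labelling bijections $\pi_{i}$, drives the upper half of the chain.
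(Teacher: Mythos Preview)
Your proposal is correct and follows essentially the same route as the paper: optimality of $\overline{G}$ (with each $G_{j}^{\pi_{j}}$ as a competing candidate) for the upper half, and the triangle inequality through $G^{*}$ together with feasibility of $\rho_{1}^{*},\dots,\rho_{m}^{*}$ for the Common Labelling objective for the lower bound. If anything, your treatment of the step $\GM(\{\overline{G}\}) \leq \frac{1}{m}\sum_{i}\cost{G_{i}}{\pi_{i}}{\overline{G}}{}$ is slightly more careful than the paper's, which writes this as an equality rather than noting that the $\pi_{i}$ are merely a feasible choice in the minimization over bijections defining $\GM(\{\overline{G}\})$.
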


\begin{proof}
We start with the left hand side of (\ref{bounds}):

\begin{equation}
\begin{array}{rcl}
\CL{} & = & \dfrac{1}{m^{2}} \displaystyle \sum_{i=1}^{m}{\sum_{j=1}^{m}{\metriccost{G_{i}}{\pi_{i}}{G_{j}}{\pi_{j}}}}\\
& \geq & \dfrac{1}{m^{2}} \displaystyle \sum_{i=1}^{m}{\sum_{j=1}^{m}{\metriccost{\overline{G}}{}{G_{j}}{\pi_{j}}}}\\
& \geq & \dfrac{1}{m} \displaystyle \sum_{j=1}^{m}{\metriccost{\overline{G}}{}{G_{j}}{\pi_{j}}}\\
& = & \GM(\{\overline{G}\}) \\
& \geq & \GM{} \\
\end{array}
\end{equation}
The second step comes from optimality of the Approximated Median Graph, see (\ref{medianGraph}).

\noindent The right hand side of (\ref{bounds}) follows from:
\begin{equation}
\begin{array}{rcl}
\GM{} & = & \dfrac{1}{m} \displaystyle\sum_{i=1}^{m}{ \metriccost{G_{i}}{\rho_{i}}{{G^{*}}}{}} \\
& = & \dfrac{1}{2m^{2}} \displaystyle\sum_{i=1}^{m}{\sum_{j=1}^{m}{\metriccost{G_{i}}{\rho_{i}}{{G^{*}}}{}+\metriccost{{G^{*}}}{}{G_{j}}{\rho_{j}}}}\\
& \geq & \dfrac{1}{2m^{2}} \displaystyle\sum_{i=1}^{m}{\sum_{j=1}^{m}{\metriccost{G_{i}}{\rho_{i}}{G_{j}}{\rho_{j}}}}\\
& \geq & \dfrac{1}{2m^{2}} \displaystyle\sum_{i=1}^{m}{\sum_{j=1}^{m}{\metriccost{G_{i}}{\pi_{i}}{G_{j}}{\pi_{j}}}} \\
& = & \dfrac{1}{2}\CL{} \\
\end{array}
\end{equation}

The third step uses the triangle inequality and the forth step comes from considering the optimality of $\pi_i$ and $\pi_j$.
\end{proof}

\begin{figure}[htp]
\centering
\includegraphics[width=1\textwidth]{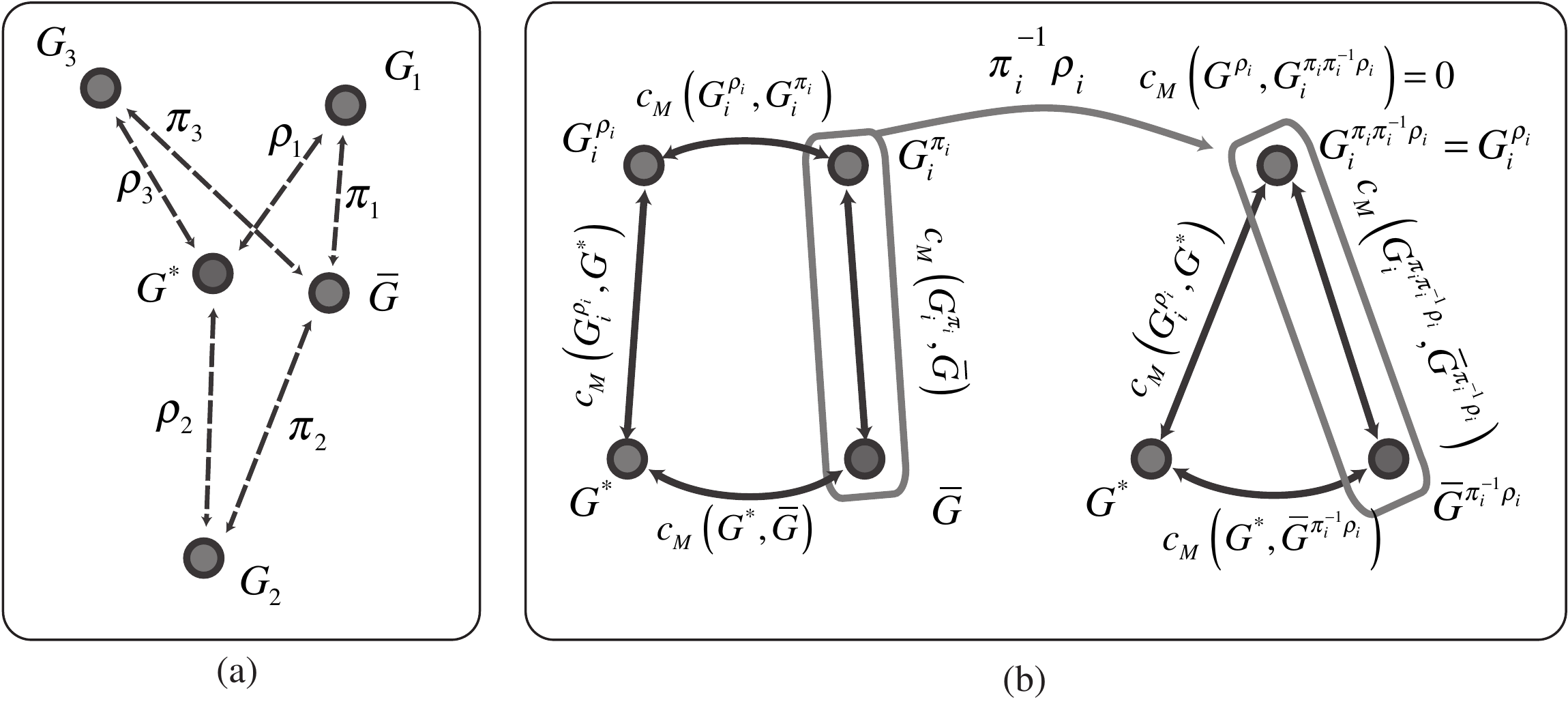}
\caption{(a) Notation for Theorems \ref{thm:bounds} and \ref{thm:distancemedian}. (b) Graphical representation of (\ref{thm2inequality}), which is the basic inequality for proving Theorem \ref{thm:distancemedian}.}
\label{figureTHM2}
\end{figure}

\begin{theorem}
Let $\mathcal{H}$ be a set of graphs and $S=\{G_{1},\dots,G_{m}\}$ be a subset of $\mathcal{H}$. In addition, let $\overline{G}$ be the Approximated Median Graph computed considering $S$ and $G^{*}$ the Generalized Median Graph. Then,

\begin{equation}
d(\overline{G},G^{*}) \leq 2\CL{} \leq 4\GM{} 
\label{distancemedian}
\end{equation}

\label{thm:distancemedian}
\end{theorem}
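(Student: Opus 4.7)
The plan is to bound $d(\overline{G}, G^{*})$ pointwise by a triangle inequality routed through each graph $G_{i}$ in the training set, then average over $i$ so that the two resulting sums can be identified with $\GM(\{\overline{G}\})$ and $\GM{}$. Once these terms appear, everything collapses to Theorem~\ref{thm:bounds}.

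More concretely, for each $i$ I would apply the triangle inequality to $\overline{G}$, $G_{i}^{\pi_{i}}$ and $(G^{*})^{\sigma_{i}}$ for a bijection $\sigma_{i}$ chosen so that $c_{M}(G_{i}^{\pi_{i}},(G^{*})^{\sigma_{i}}) = c_{M}(G_{i}^{\rho_{i}}, G^{*})$. Such a $\sigma_{i}$ exists because $c_{M}$ depends only on the relative alignment of its two argument graphs, so composing both labellings with the same permutation leaves the cost unchanged. Since $d(\overline{G}, G^{*})$ is the minimum of $c_{M}$ over all bijection pairs at its endpoints, we obtain for every $i$
\begin{equation}
d(\overline{G}, G^{*}) \;\leq\; c_{M}(\overline{G},G_{i}^{\pi_{i}}) + c_{M}(G_{i}^{\rho_{i}}, G^{*}),
\end{equation}
which is the basic inequality that I expect is depicted in Figure~\ref{figureTHM2}(b). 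Averaging over $i=1,\dots,m$ then yields
\begin{equation}
d(\overline{G}, G^{*}) \;\leq\; \GM(\{\overline{G}\}) + \GM{},
\end{equation}
since the first average is the cost incurred by $\overline{G}$ under the Common Labelling bijections and the second average is exactly what $\GM{}$ achieves.

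To finish, I would invoke Theorem~\ref{thm:bounds}. Its chain of inequalities gives $\GM(\{\overline{G}\}) \leq \CL{}$ directly, and $\GM{} \leq \GM(\{\overline{G}\}) \leq \CL{}$ from the same chain, so the right-hand side above is at most $2\CL{}$, establishing the first half of the claim. For the second half, the lower bound $\GM{} \geq \tfrac{1}{2}\CL{}$ in Theorem~\ref{thm:bounds} rearranges to $\CL{} \leq 2\GM{}$, hence $2\CL{} \leq 4\GM{}$. The main obstacle is the careful bookkeeping around the triangle step: $d$ is a double minimum over endpoint bijections, whereas the metric axiom for $c_{M}$ is stated at a single fixed labelling, so one has to line up the intermediate bijections at $G_{i}$ (namely $\pi_{i}$ on the $\overline{G}$-side edge and $\rho_{i}$ on the $G^{*}$-side edge) via a simultaneous reparameterisation of $G^{*}$. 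After that, the statement is an immediate consequence of the previous theorem.
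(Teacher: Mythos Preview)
Your proposal is correct and follows essentially the same route as the paper: triangle inequality through each $G_{i}$, averaging to obtain $\GM(\{\overline{G}\})+\GM{}$, then invoking Theorem~\ref{thm:bounds} for both the $2\CL{}$ and the $4\GM{}$ bounds. The only cosmetic difference is that the paper reparameterises $\overline{G}$ by $\pi_{i}^{-1}\rho_{i}$ to line up the bijections at $G_{i}$, whereas you reparameterise $G^{*}$ by your $\sigma_{i}$; these are dual choices and the argument is otherwise identical.
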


\begin{proof}
Let $\pi_{1,\dots,m}$ be the bijections obtained by the Common Labelling and $\rho_{1,\dots,m}$ the bijections related to $G^{*}$ and $c_M$ a metric cost function. Since $c_{M}$ is a metric we have for each single graph $G_i$: 

\begin{equation}
\begin{array}{lcl}
c_M({G^{*}},{\overline{G}}) &\leq & c_M({G^{*}},{G_i}^{\rho_i}) + c_M({\overline{G}},{G_i}^{\rho_i}) \\
&\leq & c_M({G^{*}},{G_i}^{\rho_i}) +c_M({G_i}^{\rho_i},{G_i}^{\pi_i}) + c_M({\overline{G}},{G_i}^{\pi_i}) 
\end{array}
\label{thm2inequality}
\end{equation}
since $\rho_i$ and $\pi_i$ may be different $c_M({G_i}^{\rho_i},{G_i}^{\pi_i})\neq 0$. However, applying bijection $\pi^{-1}_i\rho_i$ to ${G_i}^{\pi_i}$ and $\overline{G}$ costs are preserved $c_M(\overline{G},{G_i}^{\pi_i})=c_M(\overline{G}^{\pi^{-1}_i\rho_i},{G_i}^{\rho_i})$ and $c_M({G_i}^{\rho_i},{G_i}^{\pi_i\pi^{-1}_i\rho_i})=0$.
This reasoning is visualized in Figure \ref{figureTHM2} (b).  Hence, 

\begin{equation}
\begin{array}{lcl}
\metriccost{\overline{G}}{\pi^{-1}_{i}\rho_{i}}{{G^{*}}}{} & \leq & \metriccost{\overline{G}}{\pi^{-1}_i\rho_i}{G_{i}}{\rho_{i}} + \metriccost{G_{i}}{\rho_{i}}{{G^{*}}}{}
\end{array}.
\label{bijectionToCL}
\end{equation}
In (\ref{bijectionToCL}), vertices and edges of $G_{1},\dots,G_{m}$ and $\overline{G}$ have been permuted accordingly to $G^{*}$. To ease notation, assume that $\pi_i$ correspond to the identity. Consequently,

\begin{equation}
d(\overline{G},G^{*}) \leq \metriccost{\overline{G}}{\rho_{i}}{{G^{*}}}{} \leq \metriccost{\overline{G}}{\rho_{i}}{G_{i}}{\rho_{i}}+\metriccost{G_{i}}{\rho_{i}}{{G^{*}}}{}.
\label{triangleineq}
\end{equation}
Then, adding inequality (\ref{triangleineq}) for the different $G_i$\rq{}s we get:

\begin{equation}
\begin{array}{rcl}
d(\overline{G},G^{*})  &\leq & \dfrac{1}{m} \sum_{i=1}^{m}{\metriccost{\overline{G}}{\rho_{i}}{G_{i}}{\rho_{i}}+\metriccost{G_{i}}{\rho_{i}}{{G^{*}}}{} }\\
& = & \GM(\{\overline{G}\})+\GM{}\\
& \leq & 2\CL{}\\
\end{array}
\label{thm2ineq}
\end{equation}

\end{proof}

A desirable output for the user is that the Approximated Median Graph is an $\epsilon$ approximation of the given objects. The following corollary shows that, in this case, this Approximated Median Graph is close to the actual Median Graph.

\begin{corollary}
Let $S=\{G_{1},\dots,G_{m}\}$ admit an Approximated Median Graph $\overline{G}$ such that $\GM(\{\overline{G}\}) \leq \epsilon$. Then $ d(\overline{G},G^{*}) \leq 3 \epsilon$.
\label{corollary1}
\end{corollary}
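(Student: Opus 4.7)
The plan is to bypass the final conclusion of Theorem \ref{thm:distancemedian} and instead use an intermediate inequality that appears inside its proof. Specifically, the chain leading to (\ref{thm2ineq}) already establishes, before invoking any bound from Theorem \ref{thm:bounds}, that
\begin{equation*}
d(\overline{G},G^{*}) \;\leq\; \GM(\{\overline{G}\}) + \GM{}.
\end{equation*}

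Given the hypothesis $\GM(\{\overline{G}\}) \leq \epsilon$, the first step is to apply Theorem \ref{thm:bounds}, which gives $\GM{} \leq \GM(\{\overline{G}\})$ and hence $\GM{} \leq \epsilon$. Substituting both estimates into the inequality above immediately yields $d(\overline{G},G^{*}) \leq 2\epsilon$, which in particular implies the stated bound $d(\overline{G},G^{*}) \leq 3\epsilon$.

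There is no real obstacle here: the argument is a direct composition of the two theorems. The only conceptual point is recognising that the final displayed conclusion $d(\overline{G},G^{*}) \leq 2\CL{} \leq 4\GM{}$ in Theorem \ref{thm:distancemedian} is looser than the intermediate step available from its proof, and that it is this intermediate step, together with the sandwich $\GM{} \leq \GM(\{\overline{G}\})$ from Theorem \ref{thm:bounds}, that delivers a bound of order $\epsilon$ directly in terms of the approximation quality of $\overline{G}$.
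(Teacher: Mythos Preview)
Your proposal is correct and follows exactly the route the paper indicates: the authors explicitly say the proof ``is based on equation (\ref{thm2ineq}) of theorem \ref{thm:distancemedian},'' i.e.\ the intermediate inequality $d(\overline{G},G^{*}) \leq \GM(\{\overline{G}\})+\GM{}$, combined with $\GM{} \leq \GM(\{\overline{G}\})$ from Theorem \ref{thm:bounds}. Your observation that this actually yields $d(\overline{G},G^{*}) \leq 2\epsilon$, hence a fortiori $3\epsilon$, is also correct; the constant $3$ in the corollary is simply not sharp.
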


The proof is based on equation (\ref{thm2ineq}) of theorem \ref{thm:distancemedian} and is left to the reader.

\noindent Theorem \ref{thm:bounds} and \ref{thm:distancemedian} are proven considering the optimal computation of $\CL{}$. If we relax this assumption with a suboptimal computation we get the following corollary.

\begin{corollary}
Let $\mathcal{H}$ be a set of graphs and $S=\{G_{1},\dots,G_{m}\}$ be a subset of $\mathcal{H}$. In addition, let $\overline{G^{'}}$ be the Approximated Median Graph computed considering $S$ and the, possibly suboptimal, bijections obtained by Common Labelling whose value is $\textnormal{CL}$. Let the cost function $c_M$ be a metric. Then $\textnormal{CL} \geq \textnormal{GM}(\overline{G^{'}}) \geq \GM{}$ and $d(\overline{G^{'}},G^{*}) \leq 2\textnormal{CL}$.
\end{corollary}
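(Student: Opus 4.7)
The plan is to inspect the proofs of Theorems~\ref{thm:bounds} and~\ref{thm:distancemedian} and verify that the inequalities needed in the corollary never rely on the optimality of the common-labelling bijections $\pi_{1,\dots,m}$. The only ingredients actually in play are (i) the defining property of $\overline{G^{'}}$ as a minimizer of $\sum_j \cost{G_j}{\pi_j}{G}{}$ over $G\in\mathcal{H}$, which still holds when the $\pi_i$ are only suboptimal (since $\overline{G^{'}}$ is by construction the optimal median for the chosen bijections), and (ii) the metric axioms on $c_M$. Hence the two proofs should carry over almost verbatim.

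For the chain $\textnormal{CL}\geq\textnormal{GM}(\{\overline{G^{'}}\})\geq\GM{}$, I would mimic the left half of the proof of Theorem~\ref{thm:bounds}. For each fixed $i$, optimality of $\overline{G^{'}}$ given $\pi_i$ yields
\begin{equation*}
\sum_{j=1}^{m}\metriccost{G_i}{\pi_i}{G_j}{\pi_j}\;\geq\;\sum_{j=1}^{m}\metriccost{\overline{G^{'}}}{}{G_j}{\pi_j}.
\end{equation*}
Summing over $i$ and dividing by $m^{2}$ produces $\textnormal{CL}\geq\tfrac{1}{m}\sum_{j}\metriccost{\overline{G^{'}}}{}{G_j}{\pi_j}\geq \textnormal{GM}(\{\overline{G^{'}}\})$, the last step because $\textnormal{GM}(\{\overline{G^{'}}\})$ is itself a minimum over bijections. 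The bound $\textnormal{GM}(\{\overline{G^{'}}\})\geq\GM{}$ is then immediate from the definition of $\GM{}$ as the minimum over $\mathcal{H}$.

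For the distance bound $d(\overline{G^{'}},G^{*})\leq 2\,\textnormal{CL}$, I would reuse the triangle-inequality argument of Theorem~\ref{thm:distancemedian} without modification. That argument uses only the metric axioms of $c_M$ and the cost-preserving action of the bijection $\pi_i^{-1}\rho_i$, and delivers $d(\overline{G^{'}},G^{*})\leq\textnormal{GM}(\{\overline{G^{'}}\})+\GM{}$. Since both summands are bounded above by $\textnormal{CL}$ by the first part just established, the inequality $d(\overline{G^{'}},G^{*})\leq 2\,\textnormal{CL}$ follows.

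The main thing to watch --- more a pitfall than a genuine obstacle --- is to avoid invoking the right-hand inequality $\GM{}\geq \tfrac{1}{2}\CL{}$ of Theorem~\ref{thm:bounds}, whose fourth step explicitly exploits optimality of the $\pi_i$'s to replace $\metriccost{G_i}{\rho_i}{G_j}{\rho_j}$ with $\metriccost{G_i}{\pi_i}{G_j}{\pi_j}$. This step is exactly the one that breaks in the suboptimal regime, and it is precisely why the corollary states only the one-sided chain $\textnormal{CL}\geq\textnormal{GM}(\{\overline{G^{'}}\})\geq\GM{}$ rather than the full sandwich of Theorem~\ref{thm:bounds}.
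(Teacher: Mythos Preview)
Your proposal is correct and matches the paper's intent: the paper does not write out a proof for this corollary at all, merely remarking that Theorems~\ref{thm:bounds} and~\ref{thm:distancemedian} were proved assuming optimal $\CL{}$ and that relaxing this yields the corollary. Your explicit verification that the left-hand chain of Theorem~\ref{thm:bounds} and the triangle-inequality argument of Theorem~\ref{thm:distancemedian} use only the optimality of $\overline{G^{'}}$ (not of the $\pi_i$) is exactly the check the paper leaves implicit, and your observation about why the lower bound $\GM{}\geq\tfrac12\CL{}$ fails to transfer is spot on.
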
 

\section{Median Graph of Weighted Graphs}

Clearly, the notion of Median Graph can be used with a large set of different cost functions. In this section, we will show how using the original proposed cost \cite{Jiang2001} between graphs and restricting to weighted graphs, the Median Graph problem reduces exactly to the Common Labelling problem. Let $A_{V}: V \rightarrow [0,1]$ and $A_{E}: E \rightarrow [0,1]$ be the domain of vertices and edges attributes. In this case, the value \lq\lq{}1\rq\rq{} indicates that the graph vertex, or edge, exists and value \lq\lq{}0\rq\rq{} that the vertex, or edge, does not exist. We use a vector/matrix representation, so that $\mathbf{V_i}(r)=A_{V}(v_r)$ and $\mathbf{E_i}(r,s)=A_{E}(e_{r,s})$ where $v_r \in V_i$ and $e_{r,s} \in E_i$ and bijections $\pi_{i}$ are represented as permutation matrices $\mathbf{p_{i}}$. In case no vertex position is indicated, $\mathbf{V_i}$, we refer to the complete vector.
%

As cost function we use the squared Euclidean distance, $c(v_{r},v_{s})=\|\mathbf{V_i}(r)-\mathbf{V_j}(s)\|^{2}$ where $v_{r} \in V_i$ and $v_{s} \in V_j$. The edge cost function is defined in an equivalent form. This cost was also used in the genetic algorithm of \cite{Jiang2001} where authors proved the best prototype for a set of graphs, with fixed labellings, is the average of attributes

\begin{equation}
\left\lbrace 
\begin{array}{lcl}
\overline{\mathbf{V}}(r) &=& \displaystyle \dfrac{1}{m}\sum_{k=1}^{m}{\mathbf{V_{k}}(r)}\\
\overline{\mathbf{E}}(r,s) &=& \displaystyle \dfrac{1}{m}\sum_{k=1}^{m}{\mathbf{E_{k}}(r,s)}
\end{array}
\right.
\label{mean}
\end{equation}
Under these considerations, we can state the following theorem: 

\begin{theorem}
Let $\mathcal{H}$ be a set of weighted graphs, $S=\{G_{1},\dots,G_{m}\}$ a given subset of $\mathcal{H}$. and $p_{1,\dots,m} \in \mathds{R}^{N \times N}$ $m$ permutation matrices. Considering the cost given by the squared Euclidean distance, we have:

\begin{equation}
\frac{1}{2} \CL{} = \GM{}.
\end{equation}

\label{thm3}
\end{theorem}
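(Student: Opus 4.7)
The plan is to prove the two inequalities separately. The direction $\GM{} \geq \frac{1}{2}\CL{}$ is already given by Theorem~\ref{thm:bounds} (which only requires $c_M$ to be a metric, and the squared Euclidean distance satisfies the axioms needed in that argument once we view it as the cost between aligned graphs). So the work is in establishing the reverse inequality $\GM{} \leq \frac{1}{2}\CL{}$.

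The key observation is the classical variance identity for squared Euclidean distance: for any vectors $x_{1},\dots,x_{m}$ with mean $\bar{x} = \tfrac{1}{m}\sum_{k} x_{k}$,
\begin{equation*}
\frac{1}{m}\sum_{i=1}^{m} \| x_{i}-\bar{x}\|^{2} \;=\; \frac{1}{2m^{2}} \sum_{i=1}^{m}\sum_{j=1}^{m} \| x_{i}-x_{j}\|^{2},
\end{equation*}
which follows by expanding both sides in inner products and using $\sum_{j}\langle x_{i},x_{j}\rangle = m\langle x_{i},\bar{x}\rangle$. I would apply this identity componentwise: once to each vertex slot $r$ using the attribute values $\mathbf{V_{1}}(r),\dots,\mathbf{V_{m}}(r)$, and once to each edge slot $(r,s)$ using $\mathbf{E_{1}}(r,s),\dots,\mathbf{E_{m}}(r,s)$, then sum over all $r$ and $(r,s)$. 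Because the cost $c$ is additive over vertex and edge contributions, this yields, for any fixed tuple of permutations $\rho_{1},\dots,\rho_{m}$,
\begin{equation*}
\frac{1}{m}\sum_{i=1}^{m} c\bigl(G_{i}^{\rho_{i}}, \overline{G}(\rho)\bigr) \;=\; \frac{1}{2m^{2}} \sum_{i=1}^{m}\sum_{j=1}^{m} c\bigl(G_{i}^{\rho_{i}}, G_{j}^{\rho_{j}}\bigr),
\end{equation*}
where $\overline{G}(\rho)$ is the mean graph given by (\ref{mean}) applied to the permuted graphs. Here I would invoke the fact, proved in \cite{Jiang2001}, that for squared Euclidean cost and fixed labellings the mean graph is the optimal synthesis; in particular $\overline{G}(\rho) \in \mathcal{H}$, since averages of values in $[0,1]$ stay in $[0,1]$.

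The final step is to minimize over the permutations. Taking $\rho_{1},\dots,\rho_{m}$ to be the optimizers of $\GM{}$, the left hand side above is at least $\GM{}$ (because the mean minimizes over $G \in \mathcal{H}$, so actually it equals $\GM{}$), while the right hand side is at least $\frac{1}{2}\CL{}$ (since $\CL{}$ is the minimum of the pairwise sum over all permutation tuples). Hence $\GM{} \geq \frac{1}{2}\CL{}$; combined with the opposite inequality one gets $\GM{} = \frac{1}{2}\CL{}$.

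The only subtle point, and the one to be careful about, is the matching between the two optimization problems: $\GM{}$ optimizes over $(\rho_{1},\dots,\rho_{m},G)$ while $\CL{}$ optimizes only over permutations. The variance identity is exactly what bridges this, because it says that for the squared Euclidean cost the inner minimization over $G$ (solved in closed form by the mean) turns the average-to-centroid objective into a pairwise-sum objective with an extra factor $\tfrac{1}{2}$. Once the identity is in place the theorem is essentially bookkeeping; the whole proof fits in a few lines of display math plus one invocation of Theorem~\ref{thm:bounds}.
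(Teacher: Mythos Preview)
Your variance identity is exactly the right tool, and the paper uses it too (citing Huygens' theorem). But there are two genuine problems in how you deploy it.

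\textbf{First, you cannot invoke Theorem~\ref{thm:bounds}.} The inequality $\GM{} \geq \tfrac{1}{2}\CL{}$ in that theorem is obtained by applying the triangle inequality
\[
\metriccost{G_{i}}{\rho_{i}}{G^{*}}{} + \metriccost{G^{*}}{}{G_{j}}{\rho_{j}} \;\geq\; \metriccost{G_{i}}{\rho_{i}}{G_{j}}{\rho_{j}},
\]
and the squared Euclidean distance does \emph{not} satisfy the triangle inequality. The paper says this explicitly right after Corollary~4: ``the squared Euclidean distance, which is not a metric''. Your parenthetical claim that it ``satisfies the axioms needed in that argument'' is false; the step that fails is precisely the one you need.

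\textbf{Second, your directions are tangled.} You announce that you will prove $\GM{} \leq \tfrac{1}{2}\CL{}$, but the argument you actually give --- take $\rho_{1},\dots,\rho_{m}$ optimal for $\GM{}$, so the left side equals $\GM{}$ and the right side is $\geq \tfrac{1}{2}\CL{}$ --- yields $\GM{} \geq \tfrac{1}{2}\CL{}$, as you correctly write in your ``Hence''. So both of your arguments target the same inequality (one invalidly, one validly), and the direction $\GM{} \leq \tfrac{1}{2}\CL{}$ is never established.

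The repair is immediate and does not need Theorem~\ref{thm:bounds} at all: the variance identity gives \emph{both} directions once you pick the right optimizers. With $\pi_{1},\dots,\pi_{m}$ optimal for $\CL{}$ you get
\[
\tfrac{1}{2}\CL{} \;=\; \frac{1}{2m^{2}}\sum_{i,j} c\bigl(G_{i}^{\pi_{i}},G_{j}^{\pi_{j}}\bigr) \;=\; \frac{1}{m}\sum_{i} c\bigl(G_{i}^{\pi_{i}},\overline{G}(\pi)\bigr) \;\geq\; \GM{},
\]
and with $\rho_{1},\dots,\rho_{m}$ optimal for $\GM{}$ you get (as you already showed) the reverse. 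This is exactly the paper's proof: it carries out the first of these two computations explicitly and then says ``the converse inequality is similarly proved''.
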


\begin{proof}
The scalar product of two vectors is:

\begin{equation}
\scalprod{\mathbf{V_{i}}}{\mathbf{V_{j}}}=\displaystyle \sum_{r=1}^{n}{\mathbf{V_{i}}(r)\mathbf{V_{j}}(r)}.
\nonumber
\end{equation}

The proof follows the lines of the Huygens theorem \cite{huygens}.

\begin{equation}
\begin{array}{rcl}
\frac{1}{2} \CL{} &=& \displaystyle \frac{1}{2m^2} \sum_{i=1}^{m}\sum_{j=1}^{m} \sen{\argp{p}{i}} -2 \scalprod{\argp{p}{i}}{\argp{p}{j}} + \sen{\argp{p}{j}}\\

 &=& \displaystyle \frac{1}{m^2} \sum_{i=1}^{m}\sum_{j=1}^{m} \sen{\argp{p}{i}} - \scalprod{\argp{p}{i}}{\argp{p}{j}} \\

 &=& \displaystyle \frac{1}{m} \sum_{i=1}^{m}{\sen{\argp{p}{i}} + \sum_{i=1}^{m}\sum_{j=1}^{m} - \dfrac{2}{m^2}\scalprod{\argp{p}{i}}{{\argp{p}{j}}}} + \dfrac{1}{m^2}\scalprod{{\argp{p}{i}}}{{\argp{p}{j}}}\\
 
 &=& \displaystyle \frac{1}{m} \sum_{i=1}^{m}{(\sen{\argp{p}{i}} - \dfrac{2}{m}\scalprod{\argp{p}{i}}{\sum_{j=1}^{m}{\argp{p}{j}}})} + \scalprod{\dfrac{1}{m}\sum_{i=1}^{m}{\argp{p}{i}}}{\dfrac{1}{m}\sum_{j=1}^{m}{\argp{p}{j}}} \\

 &=& \displaystyle \frac{1}{m} \sum_{i=1}^{m}{\sen{\argp{p}{i}}-2\scalprod{\argp{p}{i}}{\overline{\mathbf{V}}}+\sen{\overline{\mathbf{V}}}} \\

&\geq& \GM{}\\

\end{array}
\end{equation}

The converse inequality is similarly proved and the process is equivalent for the edge costs.
\end{proof}

As an immediate consequence of theorem \ref{thm3} we have that the Approximated Median Graph error is the same as the Generalized Median Graph. The proof is based on theorem \ref{thm3} and is left to the reader. 

\begin{corollary} Under the hypothesis of theorem \ref{thm3} we have:
\begin{equation}
\GM{(\{\overline{G}\})}=\GM{}
\end{equation}
\end{corollary}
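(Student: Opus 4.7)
The plan is to sandwich $\GM(\{\overline{G}\})$ between $\GM$ from below and $\GM$ from above, using Theorem \ref{thm:bounds} for one direction and Theorem \ref{thm3} for the other. The lower bound $\GM(\{\overline{G}\}) \geq \GM$ is immediate from the chain (\ref{bounds}) in Theorem \ref{thm:bounds}, since it holds whenever $c_M$ is a metric and the squared Euclidean distance is a metric. So everything reduces to showing $\GM(\{\overline{G}\}) \leq \GM$.

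The key observation is that the calculation inside the proof of Theorem \ref{thm3} produces more than just $\frac{1}{2}\CL{} = \GM{}$: the penultimate line, before invoking optimality of $G^{*}$, reads
\begin{equation*}
\tfrac{1}{2}\CL{} \;=\; \tfrac{1}{m}\sum_{i=1}^{m}\bigl(\sen{\argp{p}{i}} - 2\scalprod{\argp{p}{i}}{\overline{\mathbf{V}}} + \sen{\overline{\mathbf{V}}}\bigr),
\end{equation*}
and the right-hand side is exactly $\tfrac{1}{m}\sum_{i=1}^{m}\cost{G_{i}}{\pi_{i}}{\overline{G}}{}$, i.e.\ the cost of the Approximated Median Graph when the graphs are matched to it via the Common Labelling bijections $\pi_{1,\dots,m}$. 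So I would first write this identity explicitly (including the analogous computation for the edge attributes, which by hypothesis has the same structure).

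Once this identity is in hand, the upper bound comes for free: by definition, $\GM(\{\overline{G}\})$ is the minimum of $\tfrac{1}{m}\sum_{i}\cost{G_{i}}{\rho_{i}}{\overline{G}}{}$ over all bijections $\rho_{1,\dots,m}$, so plugging in the particular choice $\rho_{i} = \pi_{i}$ gives
\begin{equation*}
\GM(\{\overline{G}\}) \;\leq\; \tfrac{1}{m}\sum_{i=1}^{m}\cost{G_{i}}{\pi_{i}}{\overline{G}}{} \;=\; \tfrac{1}{2}\CL{} \;=\; \GM{},
\end{equation*}
where the last equality is Theorem \ref{thm3}. Combined with $\GM(\{\overline{G}\}) \geq \GM{}$ from Theorem \ref{thm:bounds}, this yields the desired equality.

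I don't expect any real obstacle; the only subtlety is conceptual rather than computational, namely recognising that the middle expression appearing in the proof of Theorem \ref{thm3} is precisely the objective of $\GM(\{\overline{G}\})$ evaluated at a feasible (but not a priori optimal) choice of permutations. Everything else is an assembly of results already proved, and there is no need to redo the algebra of Theorem \ref{thm3} or the triangle-inequality manipulations of Theorem \ref{thm:bounds}.
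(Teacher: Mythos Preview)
Your overall strategy is sound, but there is a factual error in your justification of the lower bound. You write that ``the squared Euclidean distance is a metric'' in order to invoke Theorem~\ref{thm:bounds}. This is false: the squared Euclidean distance does not satisfy the triangle inequality (for instance $\|0-2\|^{2}=4>2=\|0-1\|^{2}+\|1-2\|^{2}$), and the paper itself remarks immediately after this corollary that the squared Euclidean distance ``is not a metric''. So you cannot appeal to Theorem~\ref{thm:bounds} as stated.

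Fortunately the inequality $\GM(\{\overline{G}\})\geq\GM{}$ does not actually require any metric property. It follows directly from the definition: $\GM{}=\GM(\mathcal{H})$ minimises over all $G\in\mathcal{H}$, whereas $\GM(\{\overline{G}\})$ restricts the candidate prototype to the single element $\overline{G}\in\mathcal{H}$, so the latter cannot be smaller. (If you inspect the proof of Theorem~\ref{thm:bounds}, this particular step of the chain invokes no metric axiom; only the right-hand inequality $\GM{}\geq\tfrac{1}{2}\CL{}$ uses the triangle inequality.) With this correction your argument goes through, and your route via the identity $\tfrac{1}{2}\CL{}=\tfrac{1}{m}\sum_{i}\cost{G_{i}}{\pi_{i}}{\overline{G}}{}$ extracted from the proof of Theorem~\ref{thm3} is exactly what the paper has in mind when it says the proof ``is based on theorem~\ref{thm3}''.
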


By exploiting the particular properties of the squared Euclidean distance, which is not a metric, we get a much stronger result than theorem \ref{thm:bounds}.

\section{Discussion}

In this paper we analysed the relation between two structural pattern recognition problems, the Median Graph and the Common Labelling. We proved that these problems are closely related and in some special cases they are in fact equivalent, thereby formalising a connection which up to now was unknown. This connection confirms that algorithms based on the Common Labelling, to compute the Median Graph, are theoretically sound. In addition, the proposed bounds are useful in practice, when the Common Labelling is computed using non-exact algorithms, like in \cite{Sole-Ribalta2011}.

\bibliographystyle{splncs}
\bibliography{literature}

\end{document}